\documentclass[letterpaper, 10 pt, conference]{ieeeconf}
\IEEEoverridecommandlockouts
\usepackage[T1]{fontenc}
\usepackage[utf8]{inputenc}
\usepackage{cite}
\usepackage{amsmath,amssymb,amsfonts}
\usepackage{microtype}
\usepackage{array}
\usepackage{makecell} 
\usepackage{graphicx}
\usepackage{textcomp}
\usepackage{booktabs}
\usepackage{bbding}
\usepackage{tikz}
\usepackage{bm}
\usepackage[ruled,linesnumbered,noline]{algorithm2e}

\newtheorem{lemma}{Lemma}
\newtheorem{theorem}{Theorem}

\newtheorem{problem}{Problem}

\makeatletter
\def\proof{\@ifnextchar[{\@customproof}{\@customproof[Proof]}}
\def\@customproof[#1]{\par\noindent{\itshape #1. }\ignorespaces}

\makeatother

\makeatletter
\def\bstctlcite{\@ifnextchar[{\@bstctlcite}{\@bstctlcite[@auxout]}}
\def\@bstctlcite[#1]#2{\@bsphack
  \@for\@citeb:=#2\do{%
    \edef\@citeb{\expandafter\@firstofone\@citeb}%
    \if@filesw\immediate\write\csname #1\endcsname{\string\citation{\@citeb}}\fi}%
  \@esphack}
\makeatother

\def\BibTeX{{\rm B\kern-.05em{\sc i\kern-.025em b}\kern-.08em
T\kern-.1667em\lower.7ex\hbox{E}\kern-.125emX}}
\begin{document}
\bstctlcite{IEEEtranBSTCTL}

\title{\LARGE \bf
Reconfiguration-Complete Motion Primitives with Constructive Planning for Deformable Planar Modular Robots
}

\author{Jie Gu$^{1}$, Tingting Wang$^{1}$, Hongrun Gao$^{1}$, 
Yirun Sun$^{1}$, Zhihao Xia$^{1}$, Chunxu Tian$^{1}$, Dan Zhang$^{2}$%
\thanks{This work was supported by the National Natural Science Foundation 
of China under Grant 52305012. 
\textit{Jie Gu and Tingting Wang are co-first authors.} 
(Corresponding authors: Chunxu Tian; Dan Zhang.)}%
\thanks{$^{1}$Jie Gu, Tingting Wang, Hongrun Gao, Zhihao Xia, Yirun Sun, 
and Chunxu Tian are with the Institute of AI and Robotics, Academy for 
Engineering \& Technology, Fudan University, Shanghai 200433, China 
{\tt\footnotesize chxtian@fudan.edu.cn}.}%
\thanks{$^{2}$Dan Zhang is with the Department of Mechanical Engineering, 
The Hong Kong Polytechnic University, Hung Hom, Hong Kong SAR, China 
{\tt\footnotesize dan.zhang@polyu.edu.hk}.}%
\thanks{Digital Object Identifier (DOI): see top of this page.}%
}

\maketitle
\thispagestyle{empty}
\pagestyle{empty}

\begin{abstract}
The continuously deformable geometry of modular robots makes it difficult to define a fixed representation for reconfiguration planning and analysis. This letter introduces a square-cell abstraction that maps deformable rhombus modules to fixed-size grid cells while retaining physically interpretable local motions through two primitives, pivoting and shearing. Under this abstraction, we prove that every non-straight edge-connected configuration with $N \geq 7$ can be transformed to a fixed canonical staircase using only admissible primitive motions. Since these motions are reversible, any two configurations in this class are mutually reconfigurable. The proof is constructive and directly yields a staircase-canonicalization planner that transports removable boundary modules while preserving connectivity. As a practical enhancement, we further introduce a boundary-to-delivery lookahead selector that ranks admissible high level choices without affecting the completeness guarantee. Experiments demonstrate the constructive reconfiguration process and show that the selector substantially reduces planning time, while reference comparisons indicate lower planning times than the prior framework over the shared module counts.
\end{abstract}

\begin{figure*}
    \centering
    \includegraphics[width=1\linewidth]{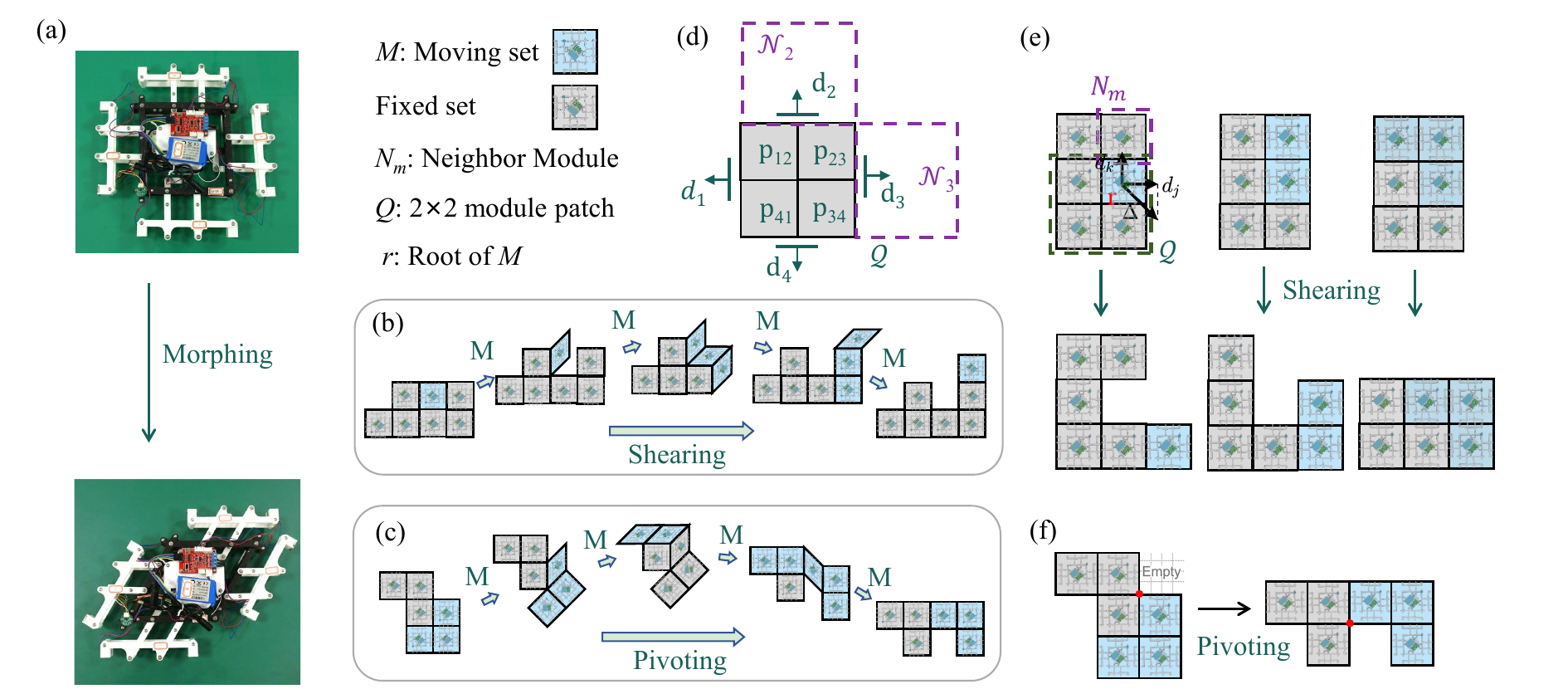}
    \caption{Continuous morphing and the corresponding primitive-motion abstractions. (a) Example morphing of the deformable rhombus platform. (b) Representative shearing sequence. (c) Representative pivoting sequence. (d) Geometric notation for a fully occupied $2\times2$ patch and its neighboring regions. (e) Shearing primitive and neighbor condition. (f) Pivoting primitive about a lattice vertex.}
    \label{fig:pivot and shear}
\end{figure*}

\section{Introduction}

Modular self-reconfigurable robots (MSRRs) are composed of repeated robotic units~\cite{castano2000conro,rus2001crystalline} that can alter their morphology, connectivity, and functional arrangement through reconfiguration~\cite{fukuda1988approach,yim2007modular}. Over the past decades, a broad range of MSRR architectures has been developed, reflecting different tradeoffs in mobility~\cite{romanishin2013mblocks,romanishin2015mblocks}, connectivity, and structural versatility~\cite{salemi2006superbot,ahmadzadeh2016modular,vu2023modular}. Despite this mechanical diversity, a common challenge is to synthesize globally feasible reconfiguration sequences from local connection and motion primitives~\cite{liang2026modular}.

Planar deformable modules make this challenge particularly pronounced because their geometry changes continuously during reconfiguration~\cite{lyder2008odin}, complicating both state representation and motion planning. Shape-changing triangular modules have been explored for planar structural reconfiguration~\cite{gerbl2024parts}, while rhombus-based modules exploit continuous geometric deformation to realize stable reconfiguration motions~\cite{gu2026rhomorph}. Practical execution further depends on docking, alignment, and local motion feasibility, which must be respected by the planner~\cite{feng2024autonomous}. For the deformable rhombus modules considered here, this creates a fundamental representation problem: planning and reconfigurability analysis require a fixed discrete state description that retains the connectivity and local motion constraints relevant to physical execution without explicitly modeling every continuous deformation degree of freedom.

Existing reconfiguration methods address different aspects of this problem. Early work formulated reconfiguration as a search over local module actions~\cite{casal1999self,rus1999compressible}, while complexity results show that optimal reconfiguration planning is hard in standard modular-robot settings~\cite{hou2010complexity,ye2019np}. Search-based and engineering-oriented methods can handle practical instances but may rely on heuristics, auxiliary workspace, or support structures~\cite{thalamy2019survey,thalamy2021engineering}. Meta-module approaches simplify planning by grouping modules into higher-level units~\cite{brandt2007new,dewey2008generalizing}, whereas theoretical treatments of lattice reconfiguration establish strong reachability properties under abstract local transformations~\cite{aloupis2013efficient,bubeck2008pushing-hypercubes,hurtado2015distributed}. Such abstractions are valuable for global analysis, but their transformations do not necessarily correspond directly to the local motions available to a continuously deformable planar module.

This leads to the central question considered in this work: can continuously deformable modules be mapped to a fixed discrete representation that retains physically interpretable local motions while still supporting a global reconfigurability guarantee? A recent framework provides initial-to-target planning for deformable quadrilateral modular robots~\cite{gu2026selfreconfiguration}. Complementing this planning framework, we focus on the representation-and-reachability problem: whether a compact discrete abstraction together with a fixed set of local motion primitives is sufficient to establish global reconfigurability for deformable rhombus modules.

We address this question using a square-cell abstraction in which each deformable rhombus module is represented by its square pose on the integer lattice. The abstraction retains lattice occupancy, edge adjacency, exterior-boundary access, and the local feasibility constraints required for reconfiguration, while removing continuous rhombus-angle variables from the global state description. Within this representation, we define two admissible primitive classes, pivoting and shearing, illustrated in Fig.~\ref{fig:pivot and shear}. We then prove that every non-straight edge-connected configuration with $N\ge 7$ can be transformed, using only these primitives, to a fixed canonical staircase $S_N$. Since the admissible primitive motions are reversible, the common canonical form establishes mutual reconfigurability between any two configurations in this class. The resulting completeness guarantee applies to the square-cell abstraction; its connection to continuous rhombus-module execution is captured through the primitive-motion templates and associated feasibility assumptions, while closed-loop hardware validation is left for future work.

The proof is constructive and directly yields a staircase-canonicalization planner. The planner maintains a decomposition into a staircase prefix, a remainder, and a mobile gadget, and repeatedly harvests removable boundary modules, transports them along the exterior boundary, and places them at the staircase tip while preserving edge connectivity. Thus, the planning procedure is not separate from the completeness proof, but follows directly from its constructive argument. As a practical enhancement to this planner, we further introduce a boundary-to-delivery lookahead selector that ranks admissible boundary targets and post-capture gadget states to reduce unnecessary transport and local search. The selector changes only the ordering of admissible high level choices and therefore does not alter the primitive set or the completeness guarantee.

The main contributions are as follows.
\begin{enumerate}
    \item We formulate a square-cell abstraction for planar deformable rhombus modular robots and define admissible pivoting and shearing primitives with explicit feasibility conditions.

    \item We prove that every non-straight edge-connected configuration with $N\ge 7$ can be transformed to a canonical staircase, establishing universal reconfigurability under the proposed abstraction.

    \item We develop a constructive staircase-canonicalization planner from the proof and further introduce a boundary-to-delivery lookahead selector to improve planning efficiency without affecting completeness.
\end{enumerate}

\section{Definition}
\subsection{Basic model}
The motivating system is a two-dimensional modular robot composed of identical deformable rhombus modules. For mathematical analysis and constructive planning, we use a square-cell abstraction: each physical rhombus module is represented by one occupied unit square whose center lies on the integer lattice. A configuration therefore records the occupied cells, rather than module identities, as a finite set of integer coordinates
\begin{equation}
C = \{ \mathbf{p}_1, \mathbf{p}_2, \dots, \mathbf{p}_N \} \subset \mathbb{Z}^2,
\end{equation}
where each $\mathbf{p}_i = (x_i,y_i)$ denotes the center of an abstract cell. This abstraction preserves the combinatorial contact structure and the local motion constraints used by the planner, while removing continuous rhombus-angle variables from the proof. For any lattice vertex $v \in \mathcal{V}$, where $\mathcal{V}:=\mathbb{Z}^2 + (\tfrac12,\tfrac12)$, we define the vertex density $D(v)\in\{0,1,2,3,4\}$ as the number of modules incident to the vertex $v$.

Two modules centered at $\mathbf{p},\mathbf{q}\in C$ are edge-adjacent if and only if $\|\mathbf{p}-\mathbf{q}\|_1 = 1$, where $\|\cdot\|_1$ denotes the
$\ell_1$ norm on $\mathbb{Z}^2$.
We collect these adjacencies in the induced graph $G(C)=(C,E)$, with
\begin{equation}
    E := \bigl\{ \{\mathbf{p},\mathbf{q}\} \subseteq C \mid \|\mathbf{p}-\mathbf{q}\|_1 = 1 \bigr\},
\end{equation}
A configuration is edge-connected if $G(C)$ is connected, or equivalently if any two modules are joined by an edge-adjacent path. This edge connectivity is the global connectivity constraint maintained by every admissible transition below.

To describe reconfiguration, we define a generic primitive motion as acting on a subset of modules, denoted as the moving set $M \subseteq C$. Let $S := C \setminus M$ denote the stationary set. Given a transformation map $f: \mathbb{Z}^2 \to \mathbb{Z}^2$ associated with the specific motion, the target of the moving set is $M' := \{ f(\mathbf{p}) \mid \mathbf{p} \in M \}$, and the resulting configuration is $C' := S \cup M'$. A transition is topologically admissible if and only if the following common constraints are met:
\begin{enumerate}
    \item $M\neq\varnothing$, $S\neq\varnothing$, and both $M$ and $S$ are edge-connected.
    \item The target cells are unoccupied by the stationary modules, i.e., $M' \cap S = \varnothing$.
    \item The final configuration $C'$ is edge-connected.
\end{enumerate}
\subsection{Pivoting}
As illustrated in Fig.~\ref{fig:pivot and shear}(f), pivoting is a local primitive motion in which the moving set $M$ rotates as a rigid body by $90^\circ$ about a lattice vertex of a module in $M$, called the pivot point $u \in\mathcal{V}$. We define the rotation map $f_p: \mathbb{Z}^2 \to \mathbb{Z}^2$ as a
$90^\circ$ counterclockwise rotation about $u$:
\begin{equation}
\label{eq:pivot-map}
f_p(\mathbf{p}) := \mathbf{p}_u + \mathbf{R}(\mathbf{p}-\mathbf{p}_u),
\quad \text{where } \mathbf{R} =
\begin{bmatrix} 0 & -1 \\ 1 & 0 \end{bmatrix}.
\end{equation}
Here $\mathbf{p}_u$ denotes the coordinate vector of the pivot vertex $u$; the clockwise case is obtained by using $\mathbf{R}^{-1}$.

A pivot is admissible if it is topologically admissible and the pivot point satisfies the vertex-density condition $D(u)=3$. Every admissible pivot is reversible by applying the opposite $90^\circ$ rotation about the same pivot point.

\subsection{Shearing}
To mitigate local deadlocks, a reversible primitive motion, called shearing and illustrated in Fig.~\ref{fig:pivot and shear}(e), is defined over a fully occupied $2\times2$ module patch $\mathcal{Q}\subset C$ (Fig.~\ref{fig:pivot and shear}(d)) and a neighboring module ${N}_m$ chosen from the neighbor set $\mathcal{N}_k$ defined below. A shear translates an edge-connected moving set $M \subset C$ rooted in a chosen corner of the patch $\mathcal{Q}$.

For this patch, let $\mathcal{D} = \{\mathbf{d}_1, \mathbf{d}_2, \mathbf{d}_3, \mathbf{d}_4\}$ be the set of four outward, grid-aligned unit normal vectors of $\mathcal{Q}$, indexed in counterclockwise cyclic order. We denote the centers of the four corner cells by:
\begin{equation}
    \mathcal{Q} = \{ \mathbf{p}_{12}, \mathbf{p}_{23}, \mathbf{p}_{34}, \mathbf{p}_{41} \},
\end{equation}
where $\mathbf{p}_{kj}$ denotes the center of the unique corner cell adjacent to the two sides with outward normals $\mathbf{d}_k$ and $\mathbf{d}_j$, and $\mathbf{p}_{kj}=\mathbf{p}_{jk}$.

For the selected side $\mathbf{d}_k$, define the occupied neighbor set
\begin{equation}
    \mathcal{N}_k(\mathcal{Q}) := \bigl\{ \mathbf{q}+\mathbf{d}_k \mid \mathbf{q}\in\mathcal{Q},\ \mathbf{q}+\mathbf{d}_k \in C\setminus\mathcal{Q} \bigr\}.
\end{equation}

Given these labels, fix a normal index $k \in \{1, \dots, 4\}$ and choose an adjacent index $j \in \{k-1, k+1\}$, with indices taken modulo 4. The root is defined as $\mathbf{r} := \mathbf{p}_{kj} \in \mathcal{Q}$ with the constraint
\begin{equation}
    M\cap\mathcal{Q}=\{\mathbf{r}\}.
\end{equation}

The shear map $f_s: \mathbb{Z}^2 \to \mathbb{Z}^2$ is defined as a translation of the moving set by the difference of the chosen normal vectors:
\begin{equation}
    \label{eq:shear-map}
    f_s(\mathbf{p}) := \mathbf{p} + \boldsymbol{\Delta}, 
    \quad \text{where } \boldsymbol{\Delta} := \mathbf{d}_j - \mathbf{d}_k.
\end{equation}

In addition to topological admissibility, a shear requires the geometric neighbor condition: the $2\times2$ patch must have at least one occupied neighbor on its $\mathbf{d}_k$ side. Equivalently, this means $\mathcal{N}_k(\mathcal{Q})\neq\varnothing$, and the chosen neighbor satisfies $\mathcal{N}_m\in\mathcal{N}_k(\mathcal{Q})$.

Every admissible shear is reversible by applying the operation with displacement $-\boldsymbol{\Delta}$ (swapping the roles of $\mathbf{d}_k$ and $\mathbf{d}_j$) in the configuration $C'$.

\section{Problem Statement and Main Theorem}
\label{sec:problem-main}

Given the admissible primitive motions in Section~II, a \emph{reconfiguration sequence} is a finite sequence of configurations
$C_0,C_1,\dots,C_T$ such that each $C_{t+1}$ is obtained from $C_t$ by one admissible primitive motion. The central question is whether such a sequence exists between two configurations and, if so, how to construct it. This leads to Problem~\ref{prob:universal}:

\begin{problem}[Universal Reconfiguration]
\label{prob:universal}
Given two edge-connected configurations $C,C'\subset\mathbb{Z}^2$ of the same size $N$, decide whether $C$ can be reconfigured into $C'$ under admissible primitive motions, and if so, construct an explicit plan.
\end{problem}

Straight-line configurations are isolated under the primitives of Section~II and are excluded throughout. Small cases with $N\le 6$ can be handled separately by finite enumeration, so the main result focuses on the regime $N\ge 7$.

\begin{theorem}[Universal Reconfigurability]
\label{thm:universal}
For any two non-straight edge-connected configurations $C$ and $C'$ of size $N\ge 7$, there exists a finite sequence of admissible primitive motions that transforms $C$ into $C'$.
\end{theorem}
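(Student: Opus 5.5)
The plan is to reduce Theorem~\ref{thm:universal} to a canonicalization statement. For each $N\ge 7$ we fix a canonical staircase $S_N$, a monotone chain of cells that alternately steps right and up, e.g.\ $(0,0),(1,0),(1,1),(2,1),\dots$, possibly with a small fixed ``gadget'' block at its base. We then prove that every non-straight edge-connected configuration $C$ of size $N$ can be transformed into $S_N$ by admissible pivots and shears. Every admissible primitive is reversible, and the reverse move is again admissible: it uses the same pivot vertex with the same density $D(u)=3$, or the reversed shear with the same patch and neighbor. Hence a sequence $C\to S_N$ and a sequence $C'\to S_N$ combine, by reversing the second, into a sequence $C\to S_N\to C'$, which proves the theorem. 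So everything rests on the canonicalization lemma.

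The canonicalization will be an induction on the number of modules already placed in the staircase. We maintain a decomposition $C = P \cup R \cup G$, where $P$ is a staircase prefix anchored at a fixed extreme cell, $R$ is the unprocessed remainder, and $G$ is a small mobile gadget of three or four modules, for instance an L-tromino or a $2\times2$ patch. The gadget always supplies a vertex of density $3$, or a full $2\times2$ patch with an occupied side-neighbor, so that pivots and shears are available wherever it sits.

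Each step runs as follows.
\begin{enumerate}
\item Choose a module $m\in R$ on the exterior boundary that is \emph{removable}, meaning $C\setminus\{m\}$ is edge-connected. Such a module exists: a leaf of a spanning tree of $G(R\cup G)$ lying on the outer face will do, and we pick an extreme one, e.g.\ lexicographically farthest from the staircase anchor.
\item Transport $m$ along the exterior boundary. A single cell sliding around a convex corner is a pivot about a vertex that has density $3$ once we count $m$ together with two stationary cells. At concave corners, and along straight walls where no density-$3$ vertex exists, we bring the gadget $G$ alongside and use it: $m$ rides as part of the moving set, pivoting about gadget vertices or being sheared with a gadget $2\times2$ patch as $\mathcal{Q}$.
\item Deliver $m$ to the tip of $P$. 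The last cell of a staircase always forms an L with its predecessor, which provides the density-$3$ vertex for the final placement pivot.
\end{enumerate}
Connectivity is preserved throughout: $m$ was removable, every intermediate position is edge-adjacent to the boundary it travels along, and $S=C\setminus M$ stays connected. When $R$ is empty, the gadget itself is unfolded into the last few staircase cells by a fixed finite sequence of moves.

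Before the induction can start, an initialization step must create the gadget from an arbitrary non-straight configuration. Non-straightness gives at least one L-shaped triple, hence a vertex of density at least $2$, and with $N\ge 7$ there are enough modules to pivot nearby boundary modules into a density-$3$ configuration and then into a $2\times2$ patch. The small-$N$ exceptions are exactly the cases where this fails for lack of material.

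The main obstacle is the transport step. We must show that a single module, possibly escorted by the gadget, can always traverse the entire exterior boundary of $R\cup P$ without breaking connectivity and without getting stuck in narrow one-cell-wide corridors or at vertices of density $4$ or $2$. I would attack it with a finite local case analysis. First, classify boundary vertex-neighborhoods (the $3\times3$ windows around the moving cell) into finitely many types. Second, exhibit, for each type, a short pivot/shear macro that advances the cell, or the cell-plus-gadget unit, by one boundary edge. Third, use the extremal choice of $m$ and the fact that only the outer boundary is traversed, so that holes never need entering, to keep the number of types finite and to ensure that the remainder left behind stays connected.
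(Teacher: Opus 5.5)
Your reduction is exactly the paper's proof of the theorem: canonicalize $C$ and $C'$ to $S_N$, reverse the second plan, and concatenate. Your canonicalization outline (staircase prefix, remainder, mobile gadget, harvesting removable outer-boundary modules, delivery to the tip, final unfolding of the gadget) also follows Algorithm~\ref{alg:canonicalize} and Lemmas~\ref{lem:initializability}--\ref{lem:convergence}.

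The genuine gap is in the transport step: the one concrete mechanism you give has its cases inverted. Take a convex corner of the base, with corner cell $c=(0,0)$, the base to its left and below, $m=(0,1)$ on top, and goal $(1,0)$.
\begin{itemize}
\item To advance, $m$ must first move to $(1,1)$. That cell meets $c$ only at a corner, so a lone $m$ becomes disconnected.
\item The corner vertex $u=(\tfrac{1}{2},\tfrac{1}{2})$ is incident only to $c$, $m$ and two empty cells. So $D(u)=2$, and the density condition fails as well.
\item Conversely, along a straight wall and into or out of a concave corner, the vertex between two consecutive wall cells is incident to those two cells and to $m$. There $D=3$, and a lone removable module advances by a single admissible pivot.
\end{itemize}
So the gadget is needed exactly at convex corners, where you planned to do without it. What your argument actually requires is a verified set of templates that turn and diagonally translate the whole gadget-plus-payload unit around such a corner. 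You also need a remedy for one-cell-wide exterior corridors, which a width-two gadget cannot enter. This is the content of the paper's Lemma~\ref{lem:traversability}: groove elimination by reshaping $R$, then meta-motion templates for both $G^\circ$ and $G^\bullet$, supported by 5/6-module gadgets and the finite enumeration of Lemma~\ref{lem:carry-normalize}. For your 3--4-module gadget this would have to be established anew. The observation that a density-$3$ vertex is available wherever the gadget sits does not by itself give progress.

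Two smaller repairs.
\begin{itemize}
\item \emph{Removability.} It must be certified in $G_B$ with $B=P\uplus R$, not in $G(R\cup G)$. The gadget departs together with $m$, and $R\cup G$ need not even be connected. A leaf of a spanning tree of $R\cup G$ can be the only cell joining $P$ to the rest of $R$. The paper instead works with the block-cut tree of $G_B$ (Lemma~\ref{lem:existence}).
\item \emph{Shear reversibility.} Your justification does not hold as stated. After the shear the root sits at $\mathbf r+\boldsymbol\Delta\notin\mathcal Q$, so $\mathcal Q$ is no longer fully occupied, and the reverse move is not a shear on the same patch. The paper does not derive shear reversibility from the definitions either; it asserts it in Section~II and its proof of the theorem simply invokes it. So here you are relying on the model's stated property, exactly as the paper does, not on your argument.
\end{itemize}
Your pivot argument, by contrast, is correct: a rotation about $u$ permutes the four cells incident to $u$, so $D(u)=3$ is preserved and the reverse pivot is admissible.
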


The proof is constructive and is organized around a fixed canonical staircase $S_N$ of size $N$. We define $S_N$ directly as a staircase made of $N$ unit cells, anchored at the origin and generated by alternating horizontal and vertical steps.
Let
$s_1=(0,0)$ and for $i\ge 1$ define
\begin{equation}
s_{i+1} \;=\;
\begin{cases}
s_i + (1,0), & i \ \text{odd},\\[2pt]
s_i + (0,1), & i \ \text{even}.
\end{cases}
\end{equation}
Equivalently, $s_i=\bigl(\lceil\frac{i-1}{2}\rceil,\ \lfloor\frac{i-1}{2}\rfloor\bigr)$. Then the staircase configuration is
\begin{equation}
S_N \;=\; \{\, s_i \in \mathbb{Z}_{\ge 0}^2 \mid 1\le i \le N \,\}.
\end{equation}

\begin{theorem}[Reachability to Canonical Staircase]
\label{thm:reachability-staircase}
For any non-straight edge-connected configuration $C$ of size $N\ge 7$, there exists a finite sequence of admissible primitive motions transforming $C$ into the canonical staircase $S_N$.
\end{theorem}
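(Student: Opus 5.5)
We will prove Theorem~\ref{thm:reachability-staircase} by induction on the number of modules already placed in a growing staircase. At each stage the configuration is split into three parts: a prefix $P_t=\{s_1,\dots,s_t\}$, suitably translated and rotated, which already matches $S_N$; a remainder $R_t$; and a small \emph{mobile gadget}. The gadget is a fixed-shape group of a few modules, for example an L-tromino or a $2\times2$ patch plus one extra module. Its role is to supply the vertex-density condition $D(u)=3$ for pivots, or a fully occupied patch $\mathcal{Q}$ with a neighbor in $\mathcal{N}_k(\mathcal{Q})$ for shears, wherever it is needed. Since the primitives are translation- and rotation-equivariant, we may normalize the frame once the first few cells are fixed. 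This is also where $N\ge 7$ enters: the gadget plus a seed of the staircase must fit, with at least one spare module.

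\textbf{Step 1 (initialization).} From an arbitrary non-straight edge-connected $C$, produce a configuration that contains the gadget attached to a seed staircase of length $2$ or $3$. Non-straightness gives a cell with two perpendicular neighbors, and hence a vertex $u$ with $D(u)\ge 2$. We first show a local lemma: by pivoting leaf modules of a spanning tree around such corners, one can create a vertex with $D(u)=3$, and then a $2\times2$ block. Connectivity is preserved because we only move cells that are leaves of a spanning tree of $G(C)$.

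\textbf{Step 2 (harvest).} While $R_t\neq\varnothing$, pick a module $m$ on the exterior boundary of $R_t$ whose removal keeps $C\setminus\{m\}$ edge-connected. Such a module exists because every finite connected graph has a non-cut vertex. Among the non-cut vertices we choose one that is extreme, for instance lexicographically farthest from the staircase tip, so that it lies on the outer face.

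\textbf{Step 3 (transport).} Move $m$ along the exterior boundary of the current configuration to the tip $s_{t+1}$. Around convex corners a single module pivots about a vertex where the boundary supplies the third incident cell. At straight boundary stretches, and at exposed convex tips where $D(u)=3$ fails, the single module alone cannot move. There we carry $m$ together with the gadget: the combined set walks by alternating pivots and shears, with the shear using the gadget's $2\times 2$ patch and the wall as the $\mathbf{d}_k$-side neighbor.

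\textbf{Step 4 (placement).} Deposit $m$ at $s_{t+1}$. Because the staircase alternates directions, every new tip cell forms an L with $s_t,s_{t-1}$, so a final pivot about the vertex shared by $s_{t-1},s_t$ and the gadget has $D=3$. Finally, once $R_t$ is empty, dissolve the gadget into the last cells of the staircase by a fixed finite sequence, checked by hand for the final $\le 4$ cells.

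The main obstacle is Step 3. We must show that the gadget-plus-carried-module can traverse the entire outer boundary of an arbitrary polyomino, including narrow corridors, width-one necks, and reflex corners, without ever disconnecting the configuration or colliding with cells. The approach we would try first is a finite catalogue of local boundary patterns in a $3\times3$ or $4\times4$ window, each with an explicit primitive sequence advancing the gadget by one boundary step. Where a pattern is blocked, for instance a one-cell-wide gap, we would instead harvest a module from that obstacle first, arguing that the lexicographic choice in Step 2 avoids such gaps. A potential function, namely the number of remainder cells plus the boundary distance to the tip, strictly decreases, and this gives termination.
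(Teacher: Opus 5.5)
Your architecture matches the paper's: a staircase prefix, a remainder and a mobile gadget, a harvest--transport--place loop, and a finite close-out. The gap is exactly at the step you flag. Step~3 is a plan (``the approach we would try first''), not a proof, and your fallback does not close it.

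\textbf{The traversal gap.} Choosing harvest targets lexicographically does not control the shape of the outer boundary the gadget must follow. It goes from its dock out to the target and then back along the prefix to $\mathrm{tip}(P_k)$. A narrow groove or width-one slot can lie on that route whichever module you harvest. ``Harvesting from the obstacle first'' already assumes the gadget can reach the obstacle.

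The missing idea is the paper's boundary preprocessing in Lemma~\ref{lem:traversability}. Grooves in $\mathcal{W}_\infty$ are filled using remainder modules only, never touching $P_k$. While the depth satisfies $d_g>1$, an anti-shear moves a side-wall module into the groove. Then pivots push the last unit vacancy along the contour to a convex end. On the resulting groove-free boundary the gadget meets only three local patterns: an in-place $90^\circ$ rotation, a unit axis translation, and a unit diagonal translation. Each has a fixed primitive template for both $G^\circ$ and $G^\bullet$, so traversal no longer depends on harvest order.

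\textbf{Supporting claims that need repair.}

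\emph{Single-module motion.} Your local analysis is reversed. Suppose $m$ sits on a straight wall of two cells $a,b$. The vertex shared by $m,a,b$ has $D(u)=3$, so one admissible pivot slides $m$ one cell along the wall. At an exposed convex corner, the only vertex of $m$ with $D(u)=3$ is the one it just pivoted about, and pivoting about it only sends $m$ back. The convex corner is therefore exactly where a lone module is stuck and the gadget is needed.

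\emph{Step~2.} ``Every finite connected graph has a non-cut vertex'' does not give one that lies in $R$ and is adjacent to $\mathcal{W}_\infty$. Being extreme among the non-cut vertices does not make it adjacent to the outer face either. The paper argues instead through a leaf block of the block--cut tree rooted at the block containing $P_k$.

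\emph{Gadget normalization.} Once the payload joins the gadget, the six-module shape must be brought back to a standard form before any template applies. The paper gets this, and also the initialization, from a finite exhaustive check of the transition graph on non-straight six-module configurations with the dock fixed (Lemma~\ref{lem:carry-normalize}). Your Steps~1 and~3 have no counterpart. In particular, creating a $2\times 2$ block by pivoting spanning-tree leaves is asserted but not shown.
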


\begin{proof}[Proof of Theorem~\ref{thm:universal}]
Apply Theorem~\ref{thm:reachability-staircase} to obtain a plan $C\to S_N$. Applying the same theorem to $C'$ gives $C'\to S_N$; reversing that plan using reversibility of the primitive motions yields $S_N\to C'$. Concatenating the two sequences gives $C\to S_N\to C'$.
\end{proof}

\section{Proofs of Reconfigurability}
\label{sec:canonicalization-alg}

\begin{algorithm}[t]
\caption{\textsc{StaircaseCanonicalize}$(C)$}
\label{alg:canonicalize}
\KwIn{Edge-connected configuration $C$ of size $N\ge 7$ that is not a straight line.}
\KwOut{A plan transforming $C$ into $S_N$.}

\BlankLine
\textbf{Initialize:} \textsc{Initialize} $C_0$ \;

\While{$R\neq\varnothing$}{
  \textsc{Select}($x$) on the outer boundary$\partial B$ \;
  \textsc{BoundaryFlow}$(G^\circ) \leadsto x$ and dock next to $x$ \;
  \textsc{Pickup}$(x)$ to obtain $\widetilde{G}^\bullet$ and update $R\leftarrow R\setminus\{x\}$ \;
  \textsc{GadgetNormalize}$(\widetilde{G}^\bullet) \leadsto G^\bullet$ \;
  \textsc{BoundaryFlow}$(G^\bullet) \leadsto \mathrm{tip}(P_k)$ and dock at the tip \;
  \eIf{$R=\varnothing$}{
    \textsc{CloseOut}$(G^\bullet)$ to merge the gadget into the prefix and obtain $S_N$ \;
  }{
    \textsc{PlaceAndExtend}$(G^\bullet) \leadsto P_{k+1}$ and $G^\circ$ \;
  }
}
\end{algorithm}

To prove Theorem~\ref{thm:reachability-staircase}, Algorithm~\ref{alg:canonicalize} gives a constructive canonicalization planner. It maintains a staircase prefix, a remainder, and a mobile gadget, and repeatedly performs one harvest, transport, and place cycle: select a removable boundary module, move the empty gadget to it, absorb it into the carrying gadget, return to the staircase tip, and extend the prefix.

The remainder of this section follows the proof structure behind the algorithm. Section~IV-A defines the maintained decomposition and boundary notation, Section~IV-B states the lemmas that make each high level step feasible, and Section~IV-C maps the algorithmic operations to primitive motion sequences or bookkeeping updates.

\subsection{Configuration Decomposition}
\label{sec:decomposition}

At each stage, the planner maintains the disjoint decomposition
\begin{equation}
C \;=\; P_k \uplus R \uplus G^\sigma,
\label{eq:decomposition}
\end{equation}
where $P_k$ is the length-$k$ staircase \emph{prefix} already fixed in the canonical form, $R$ is the \emph{remainder} still to be harvested, and $G^\sigma$ is the mobile gadget. The superscript $\sigma\in\{\circ,\bullet\}$ denotes whether the gadget is empty or carrying one payload module. We use two standard gadget shapes:
\begin{itemize}
\item $G^\circ$ (empty), with $|G^\circ|=5$;
\item $G^\bullet$ (carrying one payload module), with $|G^\bullet|=6$.
\end{itemize}
The base $B := P_k \uplus R$ contains all modules that are not currently part of the mobile gadget. Let $\mathcal{W}_\infty$ be the unique unbounded connected component of the complement lattice $\mathbb{Z}^2 \setminus B$. The outer boundary used by the gadget is the subset of base modules adjacent to this unbounded exterior:
\begin{equation}
\partial B = \{ \mathbf{p} \in B \mid \exists \mathbf{q} \in \mathcal{W}_\infty \text{ s.t. } \mathbf{p}, \mathbf{q} \text{ are edge-adjacent} \}.
\end{equation}
Hence any selected harvest target $x\in R\cap\partial B$ is accessible from the exterior boundary traversed by the gadget. Fig.~\ref{fig:configuration_decomposition} illustrates the roles of $P_k$, $R$, and $G^\sigma$.
We repeatedly use three invariants: $P_k$ is a staircase prefix and remains fixed, $B=P_k\uplus R$ remains edge-connected, and the gadget is docked on the outer boundary component at the beginning and end of each high level operation. These are the main properties used by the lemmas below.

\begin{figure}[t]
    \centering
    \includegraphics[width=\linewidth]{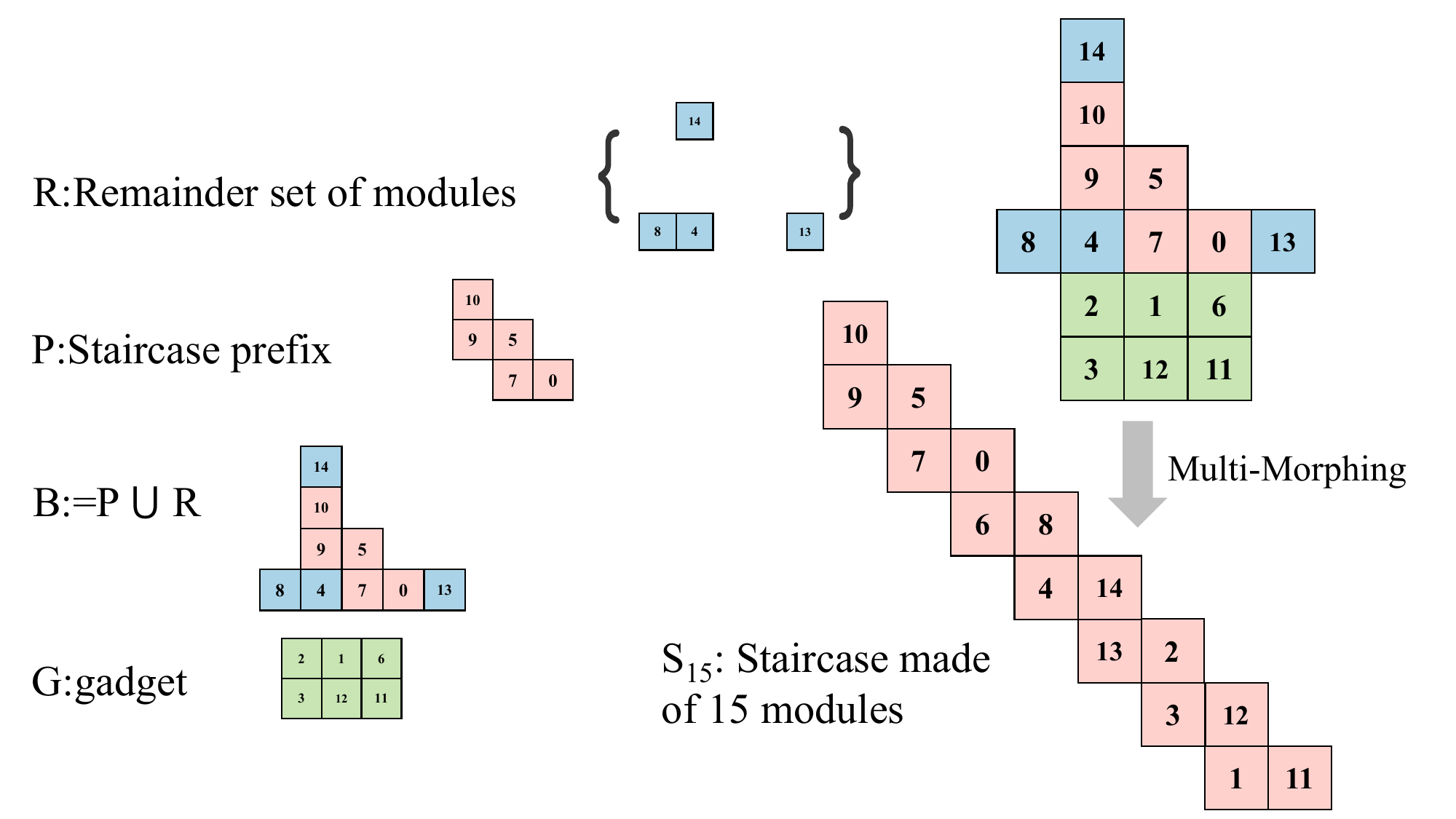}
    
    \caption{Illustration of the configuration decomposition. The overall configuration is disjointly partitioned into a staircase prefix ($P$), the remainder ($R$), and a mobile gadget ($G$) for module transportation.}
    
    \label{fig:configuration_decomposition}
\end{figure}

\subsection{Core Lemmas}
\label{subsec:core-lemmas}
The following lemmas provide the key claims needed to justify Algorithm~\ref{alg:canonicalize}. Lemma~\ref{lem:initializability} creates the initial decomposition; Lemmas~\ref{lem:existence}--\ref{lem:traversability} ensure that each iteration can select and reach a removable outer-boundary target; Lemma~\ref{lem:carry-normalize} restores the carrying-gadget shape after pickup; and Lemma~\ref{lem:convergence} handles the final closeout. Together, these lemmas show that each cycle is feasible and decreases $|R|$ by one.

\begin{lemma}[Initializability]
\label{lem:initializability}
For any configuration $C$ with $N\ge 7$ that is not a straight line, there exists a finite primitive sequence that reaches an initial decomposed configuration
\begin{equation}
C_0 = P_{k_0}\uplus R_0\uplus G^\circ,
\label{eq:init_decomp}
\end{equation}
where $P_{k_0}$ is an initial staircase prefix of length $k_0$, $R_0$ is the initial remainder, and $G^\circ$ is the standard empty gadget docked on the outer boundary component of the base $B_0=P_{k_0}\uplus R_0$.
\end{lemma}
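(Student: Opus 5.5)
The plan is to build the initial decomposition locally, at a single extremal corner of $C$, and leave the rest of the configuration as the remainder. The gadget $G^\circ$ is five modules containing a fully occupied $2\times2$ patch $\mathcal{Q}$ plus one attached module, since shearing needs a $2\times2$ patch and pivoting needs a vertex of density $3$. The prefix can be as short as $k_0=1$ (the anchor cell $s_1$), with $R_0$ everything else. Since $N\ge 7$, we have $|C|-5-1\ge 1$, so $R_0$ may be nonempty and the base $B_0$ has at least two cells. I will also use that $S_N$ is only defined up to the translation fixing $s_1$: its anchor can be chosen as any suitable extremal cell, for example the lowest-leftmost cell of the final configuration. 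Any translation left over at the end can then be absorbed by fixing coordinates.

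\textbf{Step 1 (a non-degenerate local seed).} Because $C$ is not a straight line, it contains an L-triple, that is, a cell with two edge-neighbours in orthogonal directions. Take such a triple on the outer boundary, for instance near the lexicographically extreme cell. Its corner vertex has density $3$ once one more incident cell is present. I will first show that a few admissible pivots of boundary leaves, meaning modules whose removal keeps $B$ edge-connected (such as leaves of a spanning tree of $G(C)$ lying on $\partial C$), can fill the L into a full $2\times2$ patch $\mathcal{Q}$. Boundary leaves exist because every finite connected graph has at least two non-cut vertices, and some of them lie on the outer boundary. Pivoting a leaf about a density-$3$ vertex is exactly the move that walks it around a corner of the structure.

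\textbf{Step 2 (complete the gadget and separate prefix/remainder).} With $\mathcal{Q}$ in place, shears rooted in $\mathcal{Q}$ together with further pivots can bring one more boundary module next to $\mathcal{Q}$, giving the standard five-cell shape of $G^\circ$. I then choose $P_{k_0}$ as a single cell of the base adjacent to the gadget and set $R_0$ to be the rest. Two invariants must then be checked. First, $B_0=P_{k_0}\uplus R_0$ must be edge-connected; I will ensure this by recruiting only non-cut modules into the gadget, so that the base is always the original connected set minus non-cut vertices, taken one at a time. Second, the gadget must touch $\mathcal{W}_\infty$; this holds because it is built from outer-boundary modules at an extreme corner.

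\textbf{Main obstacle.} The difficult part is Step 1 in degenerate shapes where no vertex of density $3$ is available near the boundary. Examples are long thin trees and zig-zag chains, like a staircase itself, in which every vertex has density at most $2$, so no pivot is admissible initially. I would handle this by case analysis on the shape near an L-triple. In a chain there is a leaf $\ell$ attached to a cell $c$ which has another neighbour $c'$ orthogonal to the leaf direction. The cell diagonal to $\ell$ through the corner may be empty, and then I must show that some density-$3$ vertex exists somewhere on the boundary. If it does not, then every vertex has density at most $2$, which forces $C$ to be a tree with no $2\times2$ patches and no three cells around a vertex. In that situation I would argue that any L-corner with at least $7$ cells still admits a pivot of an end module of a leg of length $\ge2$. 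I am not certain of this; the finite enumeration of small cases is where the hypothesis $N\ge7$ should be used, since with fewer cells some non-straight shapes may be rigid. I expect this density-$3$ bootstrapping, and the proof that it never gets stuck, to be the crux of the lemma.
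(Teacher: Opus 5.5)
Your proposal has a genuine gap. The two steps that actually produce $G^\circ$ are asserted rather than proved: filling an L-triple into a $2\times2$ patch by pivoting boundary leaves (Step~1), and attaching a fifth module by shears and pivots (Step~2). You also leave the crux open.

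The obstacle you single out is misdiagnosed. If $c$, $c+(1,0)$ and $c+(0,1)$ are occupied, all three cells are incident to the vertex $c+(\tfrac12,\tfrac12)$, so its density is already at least $3$. Every turn of a staircase carries such a vertex. A connected configuration with all densities at most $2$ contains no L-triple and is therefore straight, hence excluded. So the degenerate case you plan to enumerate is vacuous, while the real difficulties are untouched.

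\emph{Transport.} A lone module can roll along a flat face by pivots, but it cannot turn a convex corner. For the bar $(0,0),(1,0),(2,0)$ with $m=(2,1)$, the vertex $(\tfrac52,\tfrac12)$ has density $2$. No shear exists before a $2\times2$ patch does, so you have no mechanism for bringing recruited modules to the seed.

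\emph{Connectivity.} The seed cells themselves may be cut vertices. In the seven-cell chain $\{(-3,0),(-2,0),(-1,0),(0,0),(0,1),(0,2),(0,3)\}$, all three cells of the unique L-triple are cut vertices. So recruiting only non-cut modules does not keep $B_0=C\setminus G^\circ$ connected once the seed is absorbed into the gadget.

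The missing idea is the paper's reduction, which avoids incremental assembly entirely. Pick an outer-boundary module $t$ and a six-module set $M$ adjacent to it, with $M$ edge-connected and non-straight and $C\setminus M$ edge-connected. Lemma~\ref{lem:carry-normalize} (exhaustive enumeration of the six-module transition graph with the dock fixed) then reshapes $M$ as a whole into the standard $G^\bullet$ docked at $t$. All local bootstrapping is thus delegated to a finite check.

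Relabelling a corner module $x\in G^\bullet$ adjacent to $t$ as a base module gives $G^\circ=G^\bullet\setminus\{x\}$ and $B_0=(C\setminus M)\cup\{x\}$. This base is connected because $x$ touches $t$. This is exactly where $N\ge 7=|M|+1$ enters, and choosing $P_{k_0}$ is then bookkeeping.

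If you adopt this route, note that the paper only asserts the existence of $M$. It fails for the nine-cell plus with arms of length $2$: in any split into connected parts of sizes $6$ and $3$, the part avoiding the centre must fit in one two-cell arm. In such cases a preliminary admissible move is needed, e.g.\ pivoting one arm about the density-$3$ vertex next to the centre.
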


\begin{proof}
Choose a boundary module $t \in \partial C$ and a connected
subset $U \subseteq C$ with $|U| = 7$ and $t \in U$ such that
$M := U \setminus \{t\}$ is edge-connected, not a straight line,
and $C \setminus M$ is edge-connected
(such a choice can be ensured by selecting $M$ as a non-bridge
six-module cluster on the outer boundary).
By Lemma~4, while keeping $t$ fixed we can locally reconfigure
$M$ into the standard carrying gadget $G^\bullet$ docked at $t$.
Let $x \in G^\bullet$ be a corner backbone module adjacent to $t$,
and define $G^\circ := G^\bullet \setminus \{x\}$.
Define the resulting base as
$B_0 := C \setminus G^\circ = (C \setminus M) \cup \{x\}$.
Since $x$ is adjacent to $t \in C \setminus M$, $B_0$ remains
edge-connected.
Within $B_0$, select as many modules as possible that align with the staircase pattern to form the initial prefix $P_{k_0}$; the remaining modules constitute $R_0$. Hence,
$C_0=P_{k_0}\uplus R_0\uplus G^\circ$ is constructible.
\end{proof}

\begin{figure}
    \centering
    \includegraphics[width=1\linewidth]{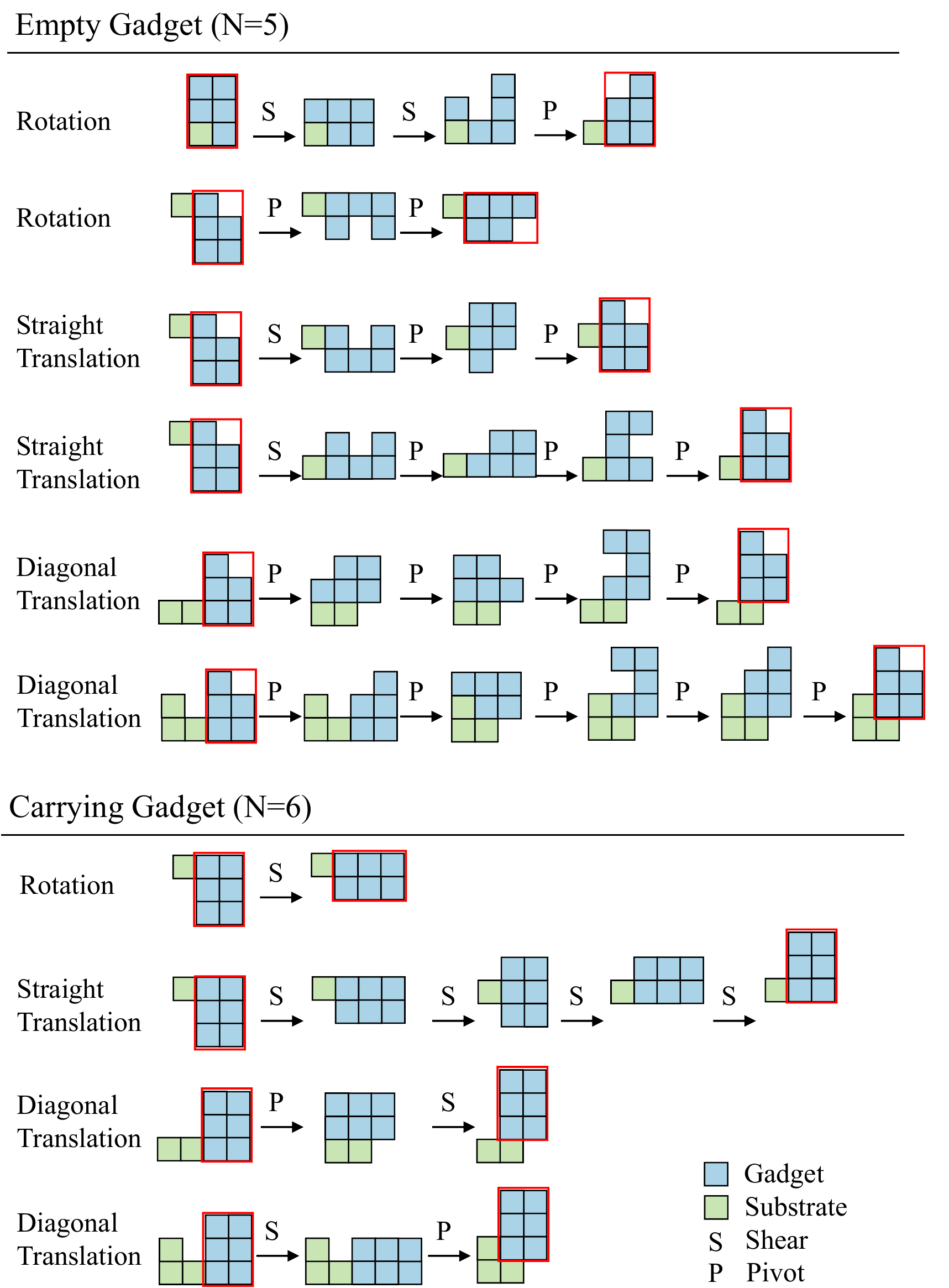}
    \caption{Primitive templates underlying the high level gadget motions used by the constructive planner.}
    \label{fig:metamotion}
\end{figure}

\begin{lemma}[Existence of a removable boundary module]
\label{lem:existence}
Let $B=P_k\uplus R$ with $R\neq\varnothing$. Then there exists a module
$x\in R\cap \partial B$ lying on the \emph{outer boundary component} of $B$
such that $B\setminus\{x\}$ remains edge-connected.
\end{lemma}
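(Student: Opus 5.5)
The plan is to work with the adjacency graph $G(B)$ and its block--cut tree, and to use the outer face of the plane grid graph to ensure that the chosen cell lies on $\partial B$. The invariant of Section~\ref{sec:decomposition} gives that $B$ is edge-connected, and $P_k$ is a staircase. The staircase facts used are: $P_k$ is an induced path in $G(B)$, consecutive cells $s_i,s_{i+1}$ are edge-adjacent, and no three cells of $P_k$ lie in a common $2\times2$ square. Call a cell \emph{removable} if it is not a cut vertex of $G(B)$. The goal is a removable cell of $R$ that is edge-adjacent to $\mathcal{W}_\infty$.

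\emph{Step 1 (outer face versus $\partial B$).} Embed $G(B)$ in the plane with cell centres as vertices and unit segments as edges. Its bounded faces are the unit squares of fully occupied $2\times2$ patches, together with the faces around holes. I will show that every vertex on the boundary walk of the unbounded face is a cell edge-adjacent to $\mathcal{W}_\infty$. If a cell $\mathbf{p}$ lies on the outer face, then some unit direction from $\mathbf{p}$ points into the unbounded face. Walking along the outer boundary, the empty cells met in this way are connected to each other through edge-adjacent or corner-sharing empty cells. They are also connected to far-away empty cells, so they lie in $\mathcal{W}_\infty$. The delicate case is an empty cell that touches the outer face only diagonally, inside an L-shaped corner of three occupied cells. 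Here I will use the rule that an empty $2\times2$ corner cell bounded by two occupied neighbours is still edge-reachable from outside along the outer walk. This is the step I expect to be the main technical obstacle, and I would first try to prove it by induction along the outer boundary walk.

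\emph{Step 2 (leaf blocks).} Suppose first that $G(B)$ is $2$-connected, or a single vertex. Then every cell is removable. It therefore suffices to find a cell of $R$ on the outer face. If every outer-face cell belonged to $P_k$, the closed region bounded by the outer walk would lie in the union of the unit squares around a staircase path. That region has empty interior, since a staircase is a width-one diagonal strip containing no $2\times2$ patch. Hence $B=P_k$, contradicting $R\neq\varnothing$.

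Otherwise the block--cut tree has at least two leaf blocks. Each leaf block $L$ has exactly one articulation vertex $a_L$. The outer face boundary walk enters and leaves $L$ only through $a_L$, and in between it visits at least one vertex of $L\setminus\{a_L\}$. Such a vertex is not a cut vertex of $G(B)$, and by Step~1 it lies on $\partial B$. It remains to find a leaf block whose private outer-face vertices include a cell of $R$.

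\emph{Step 3 (excluding leaves made only of prefix cells).} Suppose a leaf block $L$ has all of its private outer-face vertices in $P_k$. Applying the argument of the $2$-connected case inside $L$ shows that all of $L\setminus\{a_L\}$ lies in $P_k$. Then $L$ is a single edge $\{a_L,s\}$ with $s$ a cell of degree one, because $P_k$ contains no cycle. Since $P_k$ is a path, such an $s$ must be one of its two ends, $s_1$ or $s_k$. So at most two leaf blocks are "prefix leaves".

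To deal with them I would switch to a distance argument, which avoids counting leaves. Contract $P_k$ to a single root, which is possible because $P_k$ is connected. Among the cells of $R$ on the outer face, choose $x$ maximizing the graph distance $d(x,P_k)$ in $G(B)$. Any other cell $y\in R$ has a shortest path to $P_k$ through cells of strictly smaller distance. Such a path therefore avoids $x$, unless $d(y)>d(x)$.

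To exclude the case $d(y)>d(x)$, I will argue as follows. Any cell at the global maximum distance from $P_k$ lies in a leaf block that is not a prefix leaf. By Step~2, the outer-face cells of that leaf block attain a distance at least as large as its interior cells. Taking $x$ to be a global distance maximiser on the outer face of such a block, every $y\in R\setminus\{x\}$ reaches $P_k$ without passing through $x$. Therefore $B\setminus\{x\}$ is edge-connected, and $x\in R\cap\partial B$ as required.
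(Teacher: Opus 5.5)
Your proposal has genuine gaps. The central one is Step 1, which is false. Steps 2--3 rely on it to turn ``vertex on the unbounded face of $G(B)$'' into ``member of $\partial B$''.

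\textbf{Step 1 fails.} A vertex on the unbounded face of the centre graph need not have any empty edge-neighbour. The centre of a plus-shaped pentomino lies on the unbounded face, since the graph is a tree. The centre of a $3\times3$ block with one corner removed is even a non-cut vertex with this property: the unbounded face reaches it only through a diagonal wedge.

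Worse, suppose $\mathcal W_\infty$ is formed with edge-adjacency, as the paper's usage indicates. Then a pocket of empty cells can be sealed by a single diagonal contact, which is not a cycle of $G(B)$. Cells hanging into such a pocket lie on the unbounded face of $G(B)$ but not on $\partial B$.

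This breaks the statement itself. Take the $16$-cell induced path $(0,0),(1,0),(1,-1),\dots,(1,-4),(0,-4),\dots,(-3,-4),(-3,-3),\dots,(-3,-1),(-2,-1),(-1,-1),(-1,-2)$ with $P_2=\{s_1,s_2\}=\{(0,0),(1,0)\}$.
\begin{itemize}
\item Its only non-cut cells are its two ends.
\item The end $(0,0)$ lies in $P_2$.
\item The three empty neighbours of the other end $(-1,-2)\in R$ lie in the pocket $\{(0,-1),(0,-2),(0,-3),(-1,-3),(-2,-3),(-2,-2)\}$. This pocket touches the exterior only diagonally, between $(0,-1)$ and $(-1,0)$.
\end{itemize}
So every cell of $R\cap\partial B$ is a cut vertex, and no induction along the outer walk can establish Step 1. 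Your Step 2 would return $(-1,-2)$, which is not on $\partial B$. The lemma needs $\mathcal W_\infty$ taken $8$-connected, or an extra hypothesis excluding such pockets. The paper's own block--cut-tree proof has the same hole, at its assertion $\mathcal L\cap R\cap\partial B\neq\varnothing$.

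\textbf{Steps 2--3 do not close even with an $8$-connected exterior.}
\begin{itemize}
\item A leaf block need not meet the unbounded face at all, e.g.\ a pendant hanging into the hole of a ring. So ``the outer walk enters and leaves $L$ through $a_L$'' is not available for every leaf.
\item A private outer-face vertex of a leaf block need not lie on $\partial B$; the $3\times3$-minus-corner centre is an example.
\item The distance argument conflates two different maximisers. A global maximiser of $d(\cdot,P_k)$ is non-cut but may lie off $\partial B$. A maximiser among outer-face cells of $R$ lies on the boundary but can be a cut vertex, precisely because farther cells exist.
\end{itemize}
Concretely, let $P_1=\{s_1\}$ sit directly below the bottom-middle cell of a $5\times5$ ring, and hang one pendant from the top-middle cell into the hole. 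Your rule selects the top-middle cell, whose removal isolates the pendant. The farthest leaf block, the pendant edge, has no private outer-face cell, so the claim that its outer-face cells dominate its interior is empty.

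Two auxiliary facts are also wrong:
\begin{itemize}
\item $s_1,s_2,s_3$ do lie in a common $2\times2$ square.
\item A leaf block whose private cells all lie in $P_k$ need not be a single edge. The patch $\{s_1,s_2,s_3,(0,1)\}$ with $P_k=P_3$ and articulation vertex $(0,1)\in R$ is a counterexample.
\end{itemize}

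What is missing is a mechanism producing a cell that is simultaneously non-cut and edge-adjacent to $\mathcal W_\infty$. One candidate is a convex corner of the outer contour, other than the articulation vertex, in a leaf of the subtree of blocks met by the outer walk. You would also need an argument that the single staircase path $P_k$ cannot contain all such cells.
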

\begin{proof}
Let $G_B$ be the edge-adjacency graph of $B$ and let $\partial B$ be defined by adjacency to the unbounded exterior $\mathcal W_\infty$. Since $R\neq\varnothing$ and $P_k$ is a staircase path, $P_k$ cannot separate the plane; hence $R\cap\partial B\neq\varnothing$. Fix any $b\in R\cap\partial B$.

Consider the block-cut tree $T$ of $G_B$ rooted at the block containing $P_k$. Let $\mathcal L$ be the \emph{leaf block of $T$} contained in the subtree reached by following the unique root-to-$b$ path in $T$ (i.e., $\mathcal L$ is the last block on that path and has degree $1$ in $T$), and let $a$ be the unique cut vertex adjacent to $\mathcal L$ in $T$ (if $\mathcal L$ is the root block, take
$a=\varnothing$).
Because $b$ lies in this subtree, we have $\mathcal L\cap R\cap\partial B\neq\varnothing$.

If $a=\varnothing$, then $G_B$ has no articulation vertices and we may take
$x=b$. Otherwise, since $\mathcal L$ is a leaf block, the only vertex of
$\mathcal L$ that belongs to another block is $a$; thus every
$x\in\mathcal L\setminus\{a\}$ is not an articulation vertex of $G_B$.
Choose any
$x\in(\mathcal L\cap R\cap\partial B)\setminus\{a\}$.
Then $x$ is not an articulation vertex of $G_B$, hence $B\setminus\{x\}$ remains edge-connected.
\end{proof}

\begin{lemma}[Traversability]
\label{lem:traversability}
The standard gadget $G^\sigma$ can move along the \emph{outer boundary component} of the base $B$. Each step is a gadget \emph{meta-motion}, implemented by a sequence of admissible primitive motions.
\end{lemma}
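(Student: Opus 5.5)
The plan is to reduce traversal of the outer boundary to a finite catalogue of local situations. The gadget $G^\sigma$ is attached to $\partial B$. As it advances by one boundary step, the base cells it can see form a bounded window of fixed radius. For each such local pattern we exhibit a meta-motion (Fig.~\ref{fig:metamotion}) built from admissible pivots and shears.

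\textbf{Parametrize the boundary.} Describe the outer boundary by the closed walk $\gamma$ that traces the contour between $B$ and $\mathcal{W}_\infty$. Each edge of $\gamma$ separates a base cell from an exterior cell. A gadget position is a pair (current boundary edge, gadget orientation). It suffices to show that from a docked position at one edge of $\gamma$, the gadget can reach a docked position at the next edge in either direction. Finitely many such steps then reach any edge, in particular one adjacent to a chosen target $x\in\partial B$.

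\textbf{Enumerate local cases.} Up to rotation and reflection, consecutive edges of $\gamma$ meet in one of three ways:
\begin{itemize}
\item a straight continuation;
\item a convex corner, turning around a base cell;
\item a concave corner, turning into a notch.
\end{itemize}
The base cells in the relevant $3\times 3$ neighbourhood are also finitely constrained. For each case we give an explicit primitive sequence. On straight runs, the gadget moves caterpillar-style: it pivots its trailing module about a vertex of density $D(u)=3$ formed by two gadget cells and one base cell, or shears along its internal $2\times 2$ patch with a base cell as the required neighbour $N_m\in\mathcal{N}_k(\mathcal{Q})$. This $2\times2$ patch is why $|G^\circ|=5$ and $|G^\bullet|=6$ are chosen. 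At convex corners, a pivot of the whole gadget sub-block about the corner vertex turns it around the corner. At concave corners, the gadget is compressed by a shear so that it fits, then re-extended.

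For each primitive we check topological admissibility:
\begin{itemize}
\item the moving set and the stationary set (which contains all of $B$) are edge-connected;
\item the target cells lie in $\mathcal{W}_\infty$, hence are unoccupied;
\item the gadget stays edge-adjacent to $B$ throughout, so $C'$ is connected.
\end{itemize}
Since $B$ never moves, its connectivity is automatic.

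\textbf{Main obstacle: narrow features.} The hardest part is tight geometry: one-cell-wide corridors or fjords of $\mathcal{W}_\infty$, and thin protrusions of $B$, where there is no room for a five- or six-cell gadget. The density-3 pivot vertex or the shear neighbour may also be unavailable there. To handle this, we first try to argue that the gadget only needs to reach some cell adjacent to $x$ from the exterior, and to allow passage across narrow gaps by temporarily letting the gadget touch base cells on both sides, which keeps connectivity. Fjords narrower than the gadget are skipped by sliding along their mouth; we justify this by noting that Lemma~\ref{lem:existence} can be refined to pick $x$ outside such pockets. Should this not suffice, the fallback is a finite computer enumeration of all window patterns, each certified to have a valid primitive sequence. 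This is consistent with the paper's template-based presentation.
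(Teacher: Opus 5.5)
\textbf{Gap: the narrow-feature case is not resolved.} Your argument needs this case and does not supply it. You keep $B$ frozen (``since $B$ never moves''). Every meta-motion you describe presupposes room for the gadget's internal $2\times2$ patch next to the boundary: caterpillar shears on that patch, pivoting the gadget sub-block around a corner, compressing it at a concave corner. In a one-cell-wide fjord of $\mathcal W_\infty$ there is no such room. Your reduction to ``step from one edge of $\gamma$ to the next'' therefore breaks exactly there.

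Your first escape route is to refine Lemma~\ref{lem:existence} so that $x$ is chosen outside such pockets, and this is false. Let $B$ be a path formed by $P_k$ followed by a remainder $R$ that leaves $\mathrm{tip}(P_k)$ and spirals inward, with one-cell gaps between successive turns. Then $G_B$ is a path graph whose only non-articulation vertices are $s_1\in P_k$ and the inner end of the spiral. That inner end meets $\mathcal W_\infty$ only through a winding width-one corridor. Moreover, even if $x$ could be chosen freely, the delivery site at $\mathrm{tip}(P_k)$ is forced, and $R$ can wall it into a slot of the same kind. Your second escape route, a computer enumeration certifying every window, simply assumes that each window admits a valid sequence, which is exactly what is in doubt here.

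\textbf{Missing idea: reshape the base before traversing it.} The paper does not traverse the boundary as given. It first reshapes the base, moving only modules of $R$ so that $P_k$ stays fixed.
\begin{enumerate}
\item It defines groove units: exterior vacancies with three occupied neighbours, continued outward through tunnel cells with two occupied neighbours. The groove depth $d_g$ counts these units.
\item While $d_g>1$, an anti-shear pulls a side-wall module of $R$ into the groove, lowering $d_g$.
\item A remaining depth-one notch is removed by pivoting adjacent boundary modules into the vacancy. This pushes the concavity out to a convex endpoint.
\end{enumerate}
Only on the resulting groove-free outer boundary does a finite catalogue of patterns apply: in-place $90^\circ$ rotation, unit axis-aligned translation, and unit diagonal translation, each realised by a template of Fig.~\ref{fig:metamotion}.

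Your straight/convex/concave analysis plays the role of this last step, but it is sound only after groove elimination. You should replace the ``skip the fjord'' device with such a reshaping procedure. It needs a termination measure and a check that $B$ stays edge-connected throughout.
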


\begin{proof}
Let $\mathcal W_\infty$ be the unbounded component of $\mathbb Z^2\setminus B$. We define a vacancy $e \in \mathcal W_\infty$ as a \emph{groove unit} if exactly three of its four edge-neighbors are occupied by modules of $B$; this implies $e$ has a unique neighbor $e' \in \mathcal W_\infty$. Recursively, if $e'$ has exactly two occupied neighbors in $B$ (forming a tunnel), we also classify it as a groove unit and continue extending the chain outward. The process terminates at the first vacancy possessing at least three vacant neighbors in $\mathcal W_\infty$. The \emph{groove depth}, denoted by $d_g$, is defined as the total number of groove units in this chain.

All auxiliary boundary adjustments used to resolve grooves act only on modules in $R$, leaving the staircase prefix $P_k$ unchanged.

If $d_g > 1$, we apply an anti-shear to transfer a module from the groove's side-wall into the groove. This operation reduces $d_g$ by at least $1$. We iterate this process until $d_g = 1$. If $d_g = 1$, the remaining groove is eliminated by repeatedly pivoting an adjacent boundary module into the vacancy. Each pivot maintains the edge-connectivity of $B$ and propagates the vacancy along the boundary contour. Since this motion effectively shifts the local concavity from the middle of a segment toward a convex endpoint, the groove disappears after a finite number of pivots. Thus, all grooves can be resolved locally using admissible primitives, rendering the outer boundary groove-free.

On a groove-free outer boundary, a gadget encounters only three local surface patterns: an in-place $90^\circ$ rotation, a unit axis-aligned translation, or a unit diagonal translation. For both $G^\circ$ and $G^\bullet$, Fig.~\ref{fig:metamotion} provides a meta-motion template realizing each pattern as a finite sequence of admissible primitives.

Therefore, by repeatedly instantiating these templates, $G^\sigma$ can be flowed along the outer boundary component of $B$ between any two valid docking locations.
\end{proof}

\begin{lemma}[Carrying-gadget normalization]
\label{lem:carry-normalize}
Any non-straight six-module gadget, with the dock fixed, can be locally reconfigured by admissible primitive motions into the standard $G^\bullet$, preserving connectivity and avoiding collisions.
\end{lemma}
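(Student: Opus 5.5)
The plan is to treat the statement as a finite problem and solve it by a reversible connectivity argument on a bounded local state space. The gadget together with its dock lives in a fixed neighbourhood of the dock module. All primitive motions used here move only gadget modules: the dock $t$ and the rest of the base are kept in the stationary set $S$. Hence the admissibility conditions can be checked inside a bounded window $W$ around $t$. Concretely, we take $W$ to be the set of cells within $\ell_1$-distance $6$ of $t$; any connected six-module set touching $t$ lies in it. Consider the graph $\mathcal{H}$ whose vertices are the non-straight edge-connected six-module sets $M\subset W$ adjacent to $t$ and disjoint from the base. Two vertices are joined when one admissible pivot or shear, with $t$ and the base stationary, maps one to the other. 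Since every admissible pivot and shear is reversible (Section~II), $\mathcal{H}$ is undirected. It therefore suffices to show that every vertex lies in the connected component of the standard $G^\bullet$.

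To prove this connectivity we would reduce to a small catalogue of normal forms, using the polyomino classification of hexominoes (35 free shapes; 34 after removing the straight one). We proceed in three steps.

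\textbf{Step 1.} Show that any non-straight hexomino contains, or can be brought by pivots to contain, a fully occupied $2\times 2$ patch $\mathcal{Q}$. The procedure is to repeatedly pick a leaf module of the hexomino's spanning tree that sits at a corner where $D(u)=3$ for a vertex $u$ of the leaf. Pivoting the leaf about $u$ "folds" it toward the bulk, and each such fold strictly decreases the bounding-box perimeter. This process terminates, and non-straightness guarantees that a vertex of density $3$ exists; otherwise the shape would be a path with no bends.

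\textbf{Step 2.} Once a $2\times 2$ patch is present, the remaining two modules form an attached domino or two singletons. A neighbouring module on the $\mathbf{d}_k$ side supplies the neighbour condition $\mathcal{N}_k(\mathcal{Q})\neq\varnothing$. We then use shears rooted at a corner of $\mathcal{Q}$, together with pivots about density-$3$ vertices on the patch boundary, to slide these two modules around $\mathcal{Q}$ to the positions prescribed by $G^\bullet$.

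\textbf{Step 3.} Finally, a rigid $90^\circ$ pivot of the whole gadget, or a short sequence of such pivots, about a density-$3$ vertex shared with $t$ fixes the orientation relative to the dock.

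Connectivity of every intermediate configuration is maintained because each moving set is chosen to be a non-articulation leaf or a single corner-rooted set, in the spirit of the argument of Lemma~\ref{lem:existence}. Collision avoidance, meaning $M'\cap S=\varnothing$, holds because all target cells lie in the exterior region $\mathcal{W}_\infty$ near the dock. This is where we would invoke the groove-free outer boundary guaranteed by the proof of Lemma~\ref{lem:traversability}: it ensures the needed cells on the exterior side of $t$ are vacant.

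The main obstacle is Step 1 in the presence of the base. A fold that is legal in free space may be blocked because a target cell is occupied by a base module, or because the pivot vertex has density $4$ owing to the base. We would handle this by first using Step 3 in reverse, swinging the gadget so that it points away from the base into $\mathcal{W}_\infty$, before folding. Where a uniform argument fails, we would fall back on the finiteness of $\mathcal{H}$: exhaustively enumerate the (finitely many, up to symmetry) hexomino placements against the local dock patterns of a groove-free boundary, namely straight edge, convex corner and concave corner. For each one we would exhibit an explicit primitive sequence, or verify the connectivity of $\mathcal{H}$ by breadth-first search, in the same way the paper treats the $N\le 6$ cases.
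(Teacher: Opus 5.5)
Your skeleton (a finite state space around the fixed dock, an undirected transition graph by reversibility, and connectivity checked by exhaustive search) is exactly the paper's proof, which simply asserts that this enumeration succeeds. What you layer on top of it does not work as written.

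In Step~1, a single-module pivot about $u$ with $D(u)=3$ can only send the module into the unique vacant cell of the $2\times 2$ block around $u$. The opposite row (or column) of that block is occupied, so the fold leaves the bounding box unchanged and leaves $D(u)=3$. Your claimed strictly decreasing perimeter is therefore false. Since each fold is reversible, folds can simply oscillate. Step~1 thus has neither a termination argument nor a guarantee of producing a full $2\times 2$ patch, and Steps~2 and~3 inherit this.

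The more serious gap is collision avoidance. Groove-freeness from the proof of Lemma~\ref{lem:traversability} only excludes exterior vacancies with three occupied neighbours, and the tunnels leading to them. It says nothing about base modules that lie inside your window $W$ but away from the immediate neighbourhood of $t$. Such modules do two things: they occupy potential target cells, and they change the vertex densities that decide pivot admissibility (a free-space $D(u)=3$ can become $4$, or a $2$ can become $3$). So $\mathcal H$ depends on the whole base inside $W$. Enumerating only against ``straight edge, convex corner, concave corner'' does not cover the environments that actually occur, and nothing in your argument excludes an environment in which $\mathcal H$ is disconnected.

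Moreover, Lemma~\ref{lem:carry-normalize} is also invoked inside Lemma~\ref{lem:initializability} on an arbitrary input configuration, where no groove resolution has been performed at all. The paper does not derive vacancy. It explicitly assumes an exterior workspace $\mathcal W_{\mathrm{loc}}$ around the dock that is free of other modules, which reduces the claim to a single, base-independent finite graph. You need that hypothesis in place of the groove-freeness argument. With it, your fallback breadth-first search is the whole proof and Steps~1--3 are unnecessary.
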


\begin{proof}
Fix the dock and assume an exterior workspace $\mathcal W_{\mathrm{loc}}$ around the dock that is free of other modules, so that all intermediate motions of the six-module system are collision-free with the rest of the configuration. Let $\mathcal S$ be the set of all non-straight configurations of six modules with the dock fixed. Because the six modules remain edge-connected, all module positions lie within a finite lattice neighborhood of the dock; hence $\mathcal S$ is finite.

We exhaustively enumerate the transition graph on $\mathcal S$ under admissible primitives and verify that this graph is connected. Therefore, any two configurations in $\mathcal S$ are mutually reconfigurable within $\mathcal W_{\mathrm{loc}}$. In particular, any non-straight six-module gadget can be locally reconfigured into the standard carrying gadget $G^\bullet$ while preserving edge connectivity and avoiding collisions.
\end{proof}

\begin{figure}
    \centering
    \includegraphics[width=1\linewidth]{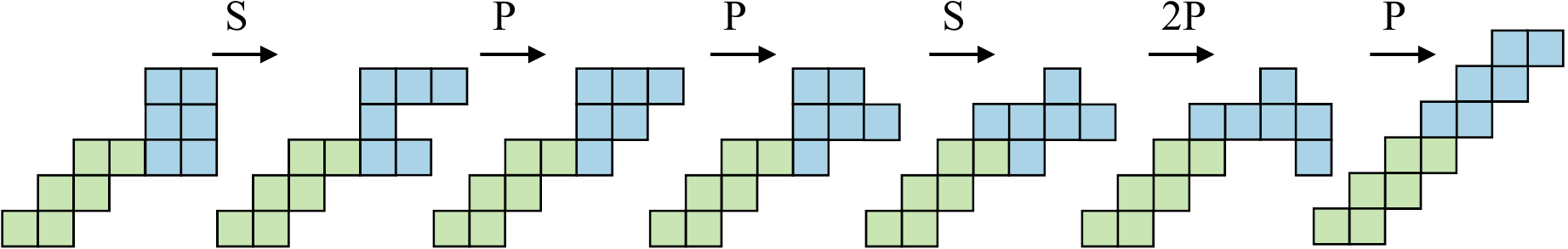}
    \caption{Illustration of the constructive proof for Lemma 5. A standard carrying gadget $G^\bullet$ is reconfigured into the final segment of the target staircase $S_N$ via a sequence of primitive motions.}
    \label{fig:convergence}
\end{figure}

\begin{lemma}[Convergence]
\label{lem:convergence}
When $R=\varnothing$, there exists a primitive motion sequence that merges the standard carrying gadget into the prefix, yielding the full staircase $S_N$.
\end{lemma}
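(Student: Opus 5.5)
When $R=\varnothing$, the configuration is exactly $C=P_k\uplus G^\bullet$, with $k=N-6$. By the invariant, the carrying gadget $G^\bullet$ is docked at the tip of the prefix, where the last \textsc{BoundaryFlow} in Algorithm~\ref{alg:canonicalize} placed it. The target is $S_N=P_k\uplus\{s_{k+1},\dots,s_N\}$. So the task reduces to a \emph{local} problem: move six modules from the standard $G^\bullet$ shape at the tip into the six staircase cells $s_{k+1},\dots,s_N$, while $P_k$ stays fixed and serves as the dock.

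The plan is to reduce to the finite, enumerative argument already used in Lemma~\ref{lem:carry-normalize}, rather than building a new global argument.

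\textbf{Step 1 (the target lies in the local family).} The final segment $T:=\{s_{k+1},\dots,s_N\}$ is six edge-connected cells forming a staircase, so it is not straight. It is attached to the dock $s_k$ via the edge $s_k s_{k+1}$. Hence $T$ is an element of the finite set $\mathcal S$ of non-straight six-module configurations with the dock fixed.

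\textbf{Step 2 (reversibility).} The transition graph on $\mathcal S$ is connected, as verified by exhaustive enumeration in Lemma~\ref{lem:carry-normalize}, and every pivot and shear is reversible. Therefore a primitive sequence $G^\bullet\to T$ exists: take the normalization $T\to G^\bullet$ and reverse it. Figure~\ref{fig:convergence} can then be presented as an explicit instance of this sequence.

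\textbf{Step 3 (admissibility in the presence of $P_k$).} Admissibility was checked relative to a free workspace $\mathcal W_{\mathrm{loc}}$, so it must be re-checked here. For the staircase, every cell $s_i$ with $i<k$ lies weakly below and to the left of $s_k$. The new cells, and the cells swept by the gadget, lie in the quadrant above and to the right of the tip. I would fix the docking orientation so that the gadget sits in this quadrant, making the relevant local workspace disjoint from $P_{k-1}$.

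Three further checks follow from this placement. First, each primitive's stationary set $S$ contains $P_k$ together with the stationary gadget modules, and it stays connected because the gadget remains attached at $s_k$. Second, the vertex densities $D(u)$ at pivot points near $s_k$ can only be affected by $s_{k-1}$ and $s_k$; I would verify directly that no template pivot uses a vertex shared with $s_{k-1}$. Third, the shear neighbor condition can only be helped by extra neighbors, never violated by them.

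\textbf{Main obstacle.} The delicate point is exactly this interaction at the tip. Lemma~\ref{lem:carry-normalize} assumes a module-free neighborhood, but $s_{k-1}$ is adjacent to the dock and could change $D(u)$ at a pivot vertex or occupy a target cell. The first remedy is to choose the parity of $k$ and the gadget's docking side (horizontal versus vertical next step) to match the figure's template. If that is insufficient, the fallback is to enlarge the finite enumeration to a seven- or eight-module system that includes $s_{k-1}$ and $s_k$ as fixed cells, and verify by computer that $G^\bullet$ reaches $T$ in that constrained graph.

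Small residual cases are handled by direct enumeration: when $k$ is tiny (for instance $N=7$ gives $k=1$), the whole configuration is finite.
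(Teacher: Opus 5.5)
\textbf{There is a genuine gap, in Step~3.} You flag it as the main obstacle, but you do not close it.

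Lemma~\ref{lem:carry-normalize} only gives \emph{some} path from $G^\bullet$ to $T=\{s_{k+1},\dots,s_N\}$, and that path lives in a transition graph built around a module-free workspace. Write $s_k=(x_k,y_k)$. Fixing the docking orientation places the two \emph{endpoints} in the quadrant $\{x\ge x_k,\ y\ge y_k\}$. Nothing confines the intermediate states of the path, though. Connectivity of the graph on $\mathcal S$ does not imply connectivity of its restriction to the quadrant. Once an intermediate state leaves the quadrant, cells of $P_{k-1}$ ($s_{k-1}$, and also $s_{k-2},s_{k-3},\dots$) can occupy a target cell or raise some $D(u)$ from $3$ to $4$.

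Conversely, for a path that \emph{is} confined to the quadrant, your three checks are automatic:
\begin{itemize}
\item no target cell can meet $P_{k-1}$;
\item the only vertex of a quadrant cell shared with $P_{k-1}$ is $s_k+(-\tfrac12,\tfrac12)$, or its mirror $s_k+(\tfrac12,-\tfrac12)$;
\item at that vertex $D(u)\le 2$ in the free setting, so the free path never pivots there.
\end{itemize}

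So the whole content of the lemma is the existence of a confined path, that is, a finite check in the actual tip environment. Steps~1--2 contribute nothing toward it. The paper's proof is exactly this check and nothing else. Fig.~\ref{fig:convergence} is an explicit primitive template, drawn at the docked tip, that fills the terminal staircase cells without collision or loss of connectivity.

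Your fallback enumeration would close the gap if carried out, but as stated it is too small. With only $s_{k-1}$ and $s_k$ fixed, the enumeration still admits states that overlap $s_{k-2},s_{k-3},\dots$. You must either forbid the six moving modules from leaving the quadrant, or fix every prefix cell within reach of the gadget.

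Also, the parity of $k=N-6$ is not yours to choose. The correct remark is that the two tip geometries are mirror images across the diagonal through $s_k$, and the primitive set is reflection-invariant. Hence a single template covers both parities.
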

\begin{proof}
Fig.~\ref{fig:convergence} gives a primitive motion template that fills the terminal cells of the target staircase $S_N$ using a docked standard carrying gadget $G^\bullet$, without collision and without breaking connectivity. Applying this template absorbs the gadget into the prefix and yields $S_N$.
\end{proof}

\subsection{High Level Operations}
\label{subsec:high_level_ops}

Algorithm~1 separates selection, motion, and bookkeeping. \textsc{Select} chooses a removable boundary target; \textsc{Pickup} and \textsc{PlaceAndExtend} update the decomposition; and \textsc{Initialize}, \textsc{BoundaryFlow}, \textsc{GadgetNormalize}, and \textsc{CloseOut} are primitive-motion routines.

\textbf{\textsc{Initialize:}}
Starting from a connected configuration $C$, this routine uses a finite primitive sequence to form the initial gadget-and-prefix decomposition $C_0$; Lemma~\ref{lem:initializability} guarantees feasibility.

\textbf{\textsc{Select:}}
This step picks a removable target module $x\in R\cap\partial B$ on the outer boundary of $B$; Lemma~\ref{lem:existence} guarantees that such an $x$ exists.

\textbf{\textsc{BoundaryFlow:}}
This routine moves the gadget $G^\sigma \in \{G^\circ, G^\bullet\}$ along the outer boundary to the selected docking location; Lemma~\ref{lem:traversability} guarantees feasibility, and Fig.~\ref{fig:metamotion} gives the motion templates.

\textbf{\textsc{Pickup:}}
Given $G^\circ$ docked next to $x$, this bookkeeping step updates the decomposition:
\begin{equation}
P_k \uplus R \uplus G^\circ \;\leadsto\; P_k \uplus (R \setminus \{x\}) \uplus \widetilde{G}^\bullet.   
\end{equation}

Only the labels change; the occupied cells do not.

\textbf{\textsc{GadgetNormalize:}}
After \textsc{Pickup}, Lemma~\ref{lem:carry-normalize} converts $\widetilde{G}^\bullet$ into the standard $G^\bullet$ by a finite local primitive sequence.

\textbf{\textsc{PlaceAndExtend:}}
If $R\neq\varnothing$, after
$G^\bullet$ has docked at the current tip
$t=\mathrm{tip}(P_k)$, the contacting corner module is relabeled
into the prefix, so this step is bookkeeping only.
\[
P_k \uplus R \uplus G^\bullet
\leadsto
P_{k+1} \uplus R \uplus G^\circ .
\]
This restores $G^\circ$ and extends the prefix.

\textbf{\textsc{CloseOut:}}
If $R=\varnothing$ after the final pickup,
the docked carrying gadget $G^\bullet$ is merged directly into
the prefix using the primitive sequence of Lemma~\ref{lem:convergence},
yielding $S_N$.

\begin{proof}[Proof of Theorem~\ref{thm:reachability-staircase}]
By Lemmas~1--5, each iteration decreases $|R|$ by one. Hence, after finitely many iterations, the final pickup makes $R=\varnothing$, and \textsc{CloseOut} yields $S_N$. Concatenating the finite admissible primitive sequences used in \textsc{Initialize}, \textsc{BoundaryFlow}, \textsc{GadgetNormalize}, and \textsc{CloseOut} yields a finite admissible primitive sequence transforming $C$ into $S_N$.
\end{proof}

\section{A Unified Lookahead Selector}
\label{sec:optimized-reconfiguration}
The constructive planner in Section~\ref{sec:canonicalization-alg} is complete, but it still leaves two coupled high level choices: which removable boundary module to harvest next and which reachable post-capture canonical state to continue from. We resolve both choices with one unified lookahead selector. The selector adds no new primitive motions; it only ranks admissible candidates and falls back to \textsc{Select} when the ranked set is empty.

At iteration $k$, let $W_k=(\omega_k^1,\ldots,\omega_k^h)$ be the next $h$ staircase sites generated by the baseline continuation rule, and let $A_k\subseteq R_k\cap\partial B_k$ be the removable modules on the outer boundary component. We partition $A_k$ into short contiguous boundary segments $Q_k=\{q_1,\ldots,q_{K_k}\}$ with $1\le |q_i|\le b$ and ignore hole boundaries, since harvesting stays on the unbounded exterior component. For each segment $q$, the baseline admissibility test selects a harvested module $x(q)\in q$ using the smallest boundary-distance proxy. After capturing $x(q)$, let
\begin{equation}
\mathcal{Z}_k(q)
:=
\{z\in\mathcal{S}_{\mathrm{can}} \mid \widetilde{G}_k^\bullet(q)\leadsto z\},
\label{eq:lookahead-candidates}
\end{equation}
where $\mathcal{S}_{\mathrm{can}}$ is the set of canonical carrying-gadget templates and $\leadsto$ denotes reachability by the local primitive search used for \textsc{GadgetNormalize}.

We score each admissible segment state pair $(q,z)$ by a compact scalar objective
\begin{equation}
\begin{aligned}
J(q,z;W_k)
&=
\lambda d_{\partial B}(\mathrm{dock}(G^\circ),q)\\
&\quad
+ L_{\mathrm{can}}(z)
+ \beta D_{\mathrm{del}}(z,\omega_k^1)\\
&\quad
- R_{\mathrm{lh}}(q;W_k,q_{k-1}),
\end{aligned}
\label{eq:unified-lookahead-score}
\end{equation}
where $d_{\partial B}$ is shortest-path distance along the outer boundary, $L_{\mathrm{can}}(z)$ is the number of local normalization moves to reach $z$, and $R_{\mathrm{lh}}(q;W_k,q_{k-1})$ is the short-horizon lookahead reward defined by
\begin{align}
A(q,W_k)=
\max_{\omega\in W_k}
\frac{1}{1+\min_{\mathbf p\in q}\|\mathbf p-\omega\|_1},
\label{eq:lookahead-alignment}\\
D_{\mathrm{del}}(z,\omega_k^1)
=
\min_{i\in\mathcal C(z)}
\|\mathbf p_i(z)-\omega_k^1\|_1,
\label{eq:lookahead-delivery}\\
H(q,q_{k-1})=
\begin{cases}
\frac{1}{1+d_{\partial B}(q,q_{k-1})}, & k>0,\\[2pt]
0, & k=0.
\end{cases}
\label{eq:unified-lookahead-terms}\\
R_{\mathrm{lh}}(q;W_k,q_{k-1})
=
\alpha A(q,W_k)+\nu H(q,q_{k-1}),
\label{eq:lookahead-reward}
\end{align}
Here $\mathcal C(z)$ is the set of endpoint modules in candidate state $z$. The weights $\lambda,\beta,\alpha,\nu>0$ are fixed tuning constants. The first three terms penalize transport, normalization, and delivery effort, while the internal weights $\alpha$ and $\nu$ balance the alignment and continuity subterms inside $R_{\mathrm{lh}}$.

\begin{equation}
(q_k^\star,z_k^\star)
\in
\arg\min_{q\in Q_k,\; z\in\mathcal Z_k(q)}
J(q,z;W_k),
\label{eq:unified-lookahead-selection}
\end{equation}
\begin{figure*}
    \centering
    \includegraphics[width=1\linewidth]{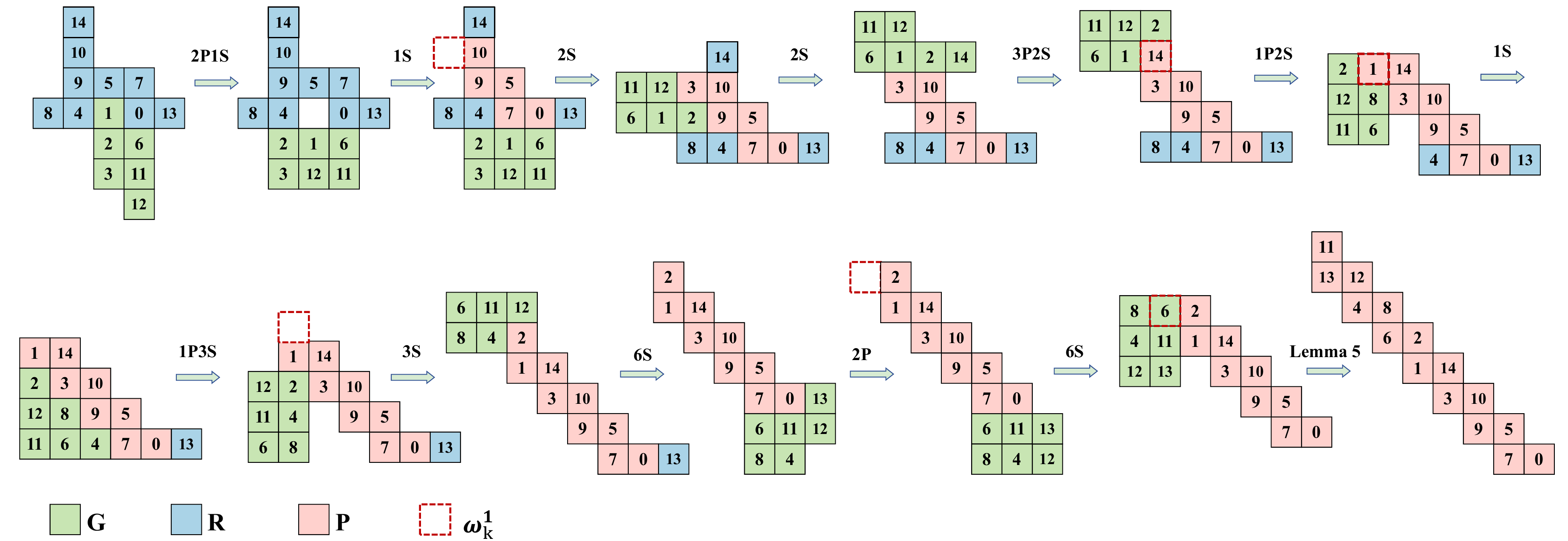}
    \caption{Reconfiguration demonstration for a 15-module example. The sequence shows the planned boundary transport process from the initial configuration to the target staircase configuration, where green, blue, and pink cells denote the gadget $G$, remainder $R$, and staircase prefix $P$, respectively. The dashed red cell marks the first staircase site $\omega_k^1$ in the lookahead window, and the arrows indicate the executed primitive motions.}
    \label{fig:example}
\end{figure*}

\begin{figure}[t]
    \centering
    \includegraphics[width=\linewidth]{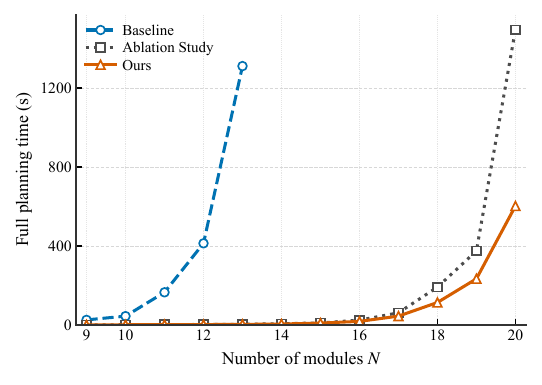}
    \caption{Full-reconfiguration planning-time comparison with respect to $N$. Baseline denotes the prior initial-to-target planner. Ablation Study and Ours denote the proposed two-stage protocol through $S_N$, without and with the lookahead selector, respectively.}
    \label{fig:statistics_ablation}
\end{figure}
The planner then chooses the minimizer in \eqref{eq:unified-lookahead-selection}, executes one corresponding iteration of Algorithm~\ref{alg:canonicalize} with $x(q_k^\star)$, and takes $z_k^\star$ as the selected post-capture canonical state. Only this iteration is committed; if reconfiguration continues, the window and candidate set are rebuilt after the prefix is extended.

Completeness is preserved because the selector only reorders admissible baseline choices. Every harvested module lies in $A_k$, every selected state in $\mathcal Z_k(q)$ is reachable by the same local primitive search used by \textsc{GadgetNormalize}, and the fallback rule recovers the baseline choice when the ranked set is empty. Therefore each executed cycle remains a valid cycle of Algorithm~\ref{alg:canonicalize}, and the termination argument of Theorem~\ref{thm:reachability-staircase} is unchanged.

\section{Experiments}
\label{sec:experiments}
The experiments evaluate the proposed planner through a qualitative reconfiguration example and full-reconfiguration runtime studies. The runtime studies compare with the prior framework under an initial-to-target protocol and use an ablation study to evaluate the contribution of the lookahead selector.

\subsection{Reconfiguration Demonstration}

Fig.~\ref{fig:example} provides a qualitative demonstration of the complete planner by visualizing the transport process that reconfigures a 15-module instance from a fixed initial configuration to the target staircase. By showing the entire sequence, this case gives an executable illustration of the constructive proof: under the theorem assumptions, a valid initial configuration is reduced to the canonical staircase by repeatedly applying the operations in Algorithm~\ref{alg:canonicalize}, including \textsc{Pickup}, \textsc{BoundaryFlow}, and \textsc{PlaceAndExtend}.

This example highlights three properties of the planner. First, every intermediate configuration remains connected because the algorithm transports only removable boundary modules. Second, each transport operation can be decomposed into the primitive pivoting and shearing motions defined earlier. Third, the sequence explicitly shows boundary transport by the gadget in its empty and carrying modes, $G^\circ$ and $G^\bullet$.

\subsection{Comparison with Prior Framework}

We compare the proposed planner with the prior framework~\cite{gu2026selfreconfiguration} under an initial-to-target reconfiguration task. The prior framework directly plans from a given initial configuration to a given target configuration. To report the same task-level quantity, our planner forms the full path by canonicalizing the initial configuration to $S_N$ and concatenating the reverse canonicalization path from the target configuration to $S_N$. For each $N\in[9,20]$, we evaluate 40 seeded initial-to-target pairs of random edge-connected, non-straight configurations, with each full reconfiguration consisting of two canonicalization runs through $S_N$.

We report success rate and average full-reconfiguration runtime. For the proposed variants, the reported runtime is the sum of the two canonicalization stages, measured at the implementation level rather than as a planning-kernel-only runtime.

The prior framework reports $100\%$ success on its tested instances, while both proposed variants complete all trials used here. The formal reachability guarantee for our planner follows from Theorem~\ref{thm:reachability-staircase}. Fig.~\ref{fig:statistics_ablation} reports the average initial-to-target planning time. On the shared module counts, the prior framework grows rapidly from $N=9$, whereas the proposed planner maintains lower full-reconfiguration runtime.

The comparison should be interpreted as a reference comparison, rather than a calibrated speedup benchmark, because such a benchmark would require rerunning both planners on identical initial-target pairs with matched hardware, timeout policy, implementation language, and visualization settings.

\subsection{Ablation Study}

The ablation study evaluates the lookahead selector while keeping the constructive planner, primitive set, configuration decomposition procedure, and target staircase definition fixed. The only changed component is the high level selection criterion for choosing the next harvested module and post-capture state.

The ablation study in Fig.~\ref{fig:statistics_ablation} shows that the lookahead selector reduces repeated planning by accounting for the delivery cost after capture. At $N=20$, for example, the full planner reduces full-reconfiguration planning time from $1496.08\,\mathrm{s}$ in the ablation study to $602.60\,\mathrm{s}$, a reduction of approximately $59.7\%$.

\section{Conclusion}
This paper presented a constructive self-reconfiguration framework for planar deformable rhombus modular robots under a square-cell abstraction. We defined admissible pivoting and shearing primitives, proved that every non-straight edge-connected configuration with $N\ge 7$ can be transformed to the canonical staircase $S_N$, and used this result to establish mutual reconfigurability within the abstraction. The proof is realized by a staircase-canonicalization planner that maintains a staircase prefix, a remainder, and a mobile gadget while preserving edge connectivity. We further added a unified boundary-to-delivery lookahead selector that changes only the high level ordering of admissible choices. On the tested random instances, the planner variants achieved $100\%$ success, and the lookahead selector reduced the two-stage full-reconfiguration runtime in the reported ablation study. Future work will focus on larger instances, tighter cost models, obstacle and workspace constraints, and closed-loop validation on physical deformable modular platforms.

\bibliographystyle{IEEEtran}
\bibliography{references}

\end{document}